\title{Graph Partition Neural Networks for Semi-Supervised Classification}
\author{
{Renjie Liao\thanks{~Work done partially while author was at Microsoft Research.}} \\
University of Toronto \\
Uber ATG Toronto \\
Vector Institute \\
rjliao@cs.toronto.edu \\
\And
{Marc Brockschmidt} \\
Microsoft Research Cambridge \\
mabrocks@microsoft.com \\
\And
{Daniel Tarlow$^{\ast}$} \\
Google Brain \\
dtarlow@google.com \\
\AND
{Alexander L. Gaunt} \\
Microsoft Research Cambridge \\
algaunt@microsoft.com \\
\And
{Raquel Urtasun} \\
University of Toronto \\
Uber ATG Toronto \\
Vector Institute \\
urtasun@cs.toronto.edu \\
\And
{Richard Zemel} \\
University of Toronto\\ 
Vector Institute \\
Canadian Institute for Advanced Research \\
zemel@cs.toronto.edu
} 
\newcommand{\comment}[1]{}
\newcommand{\ourmodelshort}{GPNN}
\newtheorem{prop}{Proposition}
\newcommand{\graph}{\mathcal{G}}
\newcommand{\subgraph}{\mathcal{S}}
\newcommand{\nodes}{\mathcal{V}}
\newcommand{\edges}{\mathcal{E}}
\newcommand{\statesymbol}{\boldsymbol{h}}
\newcommand{\state}[2]{\ensuremath{\statesymbol^{(#1)}_{#2}}}
\newcommand{\msg}{\boldsymbol{m}}
\newcommand{\outneighbors}[0]{\ensuremath{\mathcal{N}_{\mathit{out}}}}
\newcommand{\inneighbors}[0]{\ensuremath{\mathcal{N}_{\mathit{in}}}}
\newcommand{\rSC}[1]{Sect.~\ref{#1}}
\newcommand{\rF}[1]{Fig.~\ref{#1}}
\newcommand{\rA}[1]{Alg.~\ref{#1}}
\newcommand{\rTab}[1]{Tab.~\ref{#1}}
\begin{document}

\maketitle

\begin{abstract}
We present graph partition neural networks (GPNN), an extension of graph neural
networks (GNNs) able to handle extremely large graphs.
GPNNs alternate between locally propagating information between nodes in small
subgraphs and globally propagating information between the subgraphs.
To efficiently partition graphs, we experiment with several partitioning algorithms and also
propose a novel variant for fast processing of large scale graphs.
We extensively test our model on a variety of semi-supervised node
classification tasks.
Experimental results indicate that GPNNs are either superior or comparable to 
state-of-the-art methods on a wide variety of datasets for graph-based 
semi-supervised classification. We also show that GPNNs can achieve similar performance as standard GNNs with
fewer propagation steps.
\end{abstract}

\section{Introduction}


Graphs are a flexible way of encoding data, and many tasks can be
cast as learning from graph-structured inputs.
Examples include
prediction of properties of chemical molecules~\citep{duvenaud2015convolutional},
answering questions about knowledge graphs~\citep{Marino16},
natural language processing with parse-structured inputs (trees or richer structures like Abstract Meaning Representations)~\citep{banarescu2012abstract},
predicting properties of data structures or source code in programming languages~\citep{li2015gated,allamanis18learning}, and
making predictions from scene graphs~\citep{Teney16}.
Sequence data can be seen as a special case of a simple chain-structured graph.
Thus, we are interested in training high-capacity neural network-like
models on these types of graph-structured inputs.
Graph Neural Networks (GNNs)~\citep{gori2005new,scarselli2009graph,li2015gated,qi20173d,li2017situation,garcia2018few,wang18nervenet}
are one of the best contenders, although there has been much recent interest in
applying other neural network-like models to graph data, including
generalizations of convolutional
architectures~\citep{duvenaud2015convolutional,kipf2016semi,schlichtkrull2017modelling}.
Gilmer et al.~\cite{Gilmer17} recently reviewed and unified many of these models.

An important issue that has not received much attention in GNN models is how
information gets propagated across the graph.
There are often scenarios in which information has to be propagated over long
distances across a graph, e.g.,\ when we have long sequences augmented with
additional relationships between elements of the sequence, like in text,
programming language source code, or temporal streams.
The simplest approach, and the one adopted by almost all graph-based neural
networks is to follow \emph{synchronous message-passing
  systems}~\citep{attiya2004distributed} from distributed computing theory.
Specifically, inference is executed as a sequence of rounds: in each round,
every node sends messages to all of its neighbors, the messages are delivered
and every node does some computation based on the received messages.
While this approach has the benefit of being simple and easy to implement, it is
especially inefficient when the task requires spreading information across long
distances in the graph.
For example, in processing sequence data, if we were to employ the above
schedule for a sequence of length $N$, it would take $O(N^2)$ messages
to propagate information from the beginning of the sequence to the end, and during training all $O(N^2)$ messages must be stored in memory.
In contrast, the common practice with sequence data is to use a forward pass
followed by a backward pass at a cost of $O(N)$ to propagate information from
end to end, as in hidden Markov models (HMMs) or bidirectional recurrent neural networks (RNNs), for example.

One possible approach for tackling this problem is to propagate information over
the graph following some pre-specified sequential order, as in
Bidirectional LSTMs.
However, this sequential solution has several issues.
First, if a graph used for training has large diameter, the unrolled GNN
computational graph will be large (cf. Bidirectional LSTMs on long sequences). This leads
to fundamental issues with learning (e.g., vanishing/exploding gradients) and implementation difficulties (i.e., resource constraints).
Second, sequential schedules are typically less amenable to efficient acceleration on parallel hardware.
More recently, Gilmer et al.~\cite{Gilmer17} attempted to tackle the first problem by introducing a
``dummy node'' with connections to all nodes in the input graph, meaning that
all nodes are at most two steps away from each other.
However, we note that the graph structure itself often contains important
information, which is modified by adding additional nodes and edges.

In this work, we propose graph partition neural networks (GPNN) that exploit a propagation schedule combining features of synchronous and
sequential propagation schedules.
Concretely, we first partition the graph into disjunct subgraphs and a cut
set, and then alternate steps of synchronous propagation within subgraphs with
synchronous propagation within the cut set.
In \rSC{sec:model}, we discuss different propagation schedules on an example,
showing that GPNNs can be substantially more efficient than standard GNNs,
and then present our model formally.
Finally, we evaluate our model in \rSC{sec:experiments} on a variety
of semi-supervised classification benchmarks.
The empirical results suggest that our models are either superior to or
comparable with state-of-the-art learning systems on graphs.


\section{Related Work}


There are many neural network models for handling graph-structured inputs. They
can be roughly categorized into generalizations of recurrent neural networks
(RNNs)
\citep{goller1996learning,gori2005new,scarselli2009graph,socher2011parsing,tai2015improved,li2015gated,Marino16,
  qi20173d,li2017situation} and generalizations of convolutional neural networks
(CNNs)
\citep{bruna2013spectral,duvenaud2015convolutional,kipf2016semi,schlichtkrull2017modelling}.
Gilmer et al.~\cite{Gilmer17} provide a good review and unification of many of these models, and they present some additional model variations that lead to strong empirical results in making predictions from molecule-structured inputs.

In RNN-like models, the standard approach is to propagate information using a
synchronous schedule. In convolution-like models, the node updates mimic
standard convolutions where all nodes in a layer are updated as functions of
neighboring node states in the previous layer. This leads to information
propagating across the graph in the same pattern as synchronous schedules. While
our focus has been mainly on the RNN-like model of Li et al.~\cite{li2015gated},
it would be interesting to apply our schedules to the other models as well.

Some of the RNN based neural network models operate on restricted classes of graphs and employ sequential or sequential-like schedules. For example, recursive neural networks \citep{goller1996learning,socher2011dynamic} and tree-LSTMs \cite{tai2015improved} have bidirectional variants that use fully sequential schedules.
Sukhbaatar et al.~\cite{sukhbaatar2016learning} modeling of agents can be viewed
as a GNN model with a sequential schedule, where messages are passed inwards
towards a master node that aggregates messages from different agents, and then
outwards from the master node to all the agents. The difference in our work is
the focus on graphs with arbitrary structure (not necessarily a sequence or
tree). Recently, Marino et al.~\cite{Marino16} developed an attention-like
mechanism to dynamically select a subset of graph nodes to propagate information
from, but the propagation is synchronous amongst selected nodes.

Recently, Hamilton et al.~\cite{hamilton2017inductive} propose a graph sample and aggregate (GraphSAGE) method.
It first samples a neighborhood graph for each node which can be regarded as overlapping partitions of the original graph.
An improved graph convolutional network (GCN)~\cite{kipf2016semi} is then applied to each neighborhood graph independently.
They show that this partition based strategy facilitates the unsupervised representation learning on large scale graphs.


An area where scheduling has been studied extensively is in the probabilistic inference literature. 
It is common to decompose a graph into a set of spanning trees and sequentially update the tree structures \cite{wainwright2002tree}. 
Graph partition based schedules have been explored in belief propagation (BP) \cite{pearl1988probabilistic}, generalized belief propagation (GBP) \cite{yedidia2003understanding,welling2004choice}, generalized mean-field inference \cite{xing2002generalized,xing2004graph} and dual decomposition based inference~\cite{komodakis2011mrf,zhang2014message}.
In generalized mean-field inference \cite{xing2002generalized}, a graph partition algorithm, e.g., graph cut, is applied to obtain the clusters of nodes. 
A sequential update schedule among clusters is adopted to perform variational inference.
Zhang et al.~\cite{zhang2014message} adopt a partition-based strategy to better distribute the dual decomposition based message passing algorithm for high order MRF. 
The junction tree algorithm~\cite{lauritzen1988local} can also be viewed as a partition based inference where the partition is obtained by finding the maximum spanning tree on the weighted clique graph.
Each node of the junction tree corresponds to a cluster of nodes, i.e., maximal clique, in the original graph. 
A sequential update can then be executed on the junction tree.
See also \cite{elidan2006residual,Tarlow11graphproduct,sutton2012improved} for more discussion of sequential updates in the context of belief propagation. Finally, the question of sequential versus synchronous updates arises in numerical linear algebra. Jacobi iteration uses a synchronous update while Gauss-Seidel applies the same algorithm but according to a sequential schedule.


\section{Model}
\label{sec:model}

In this section, we briefly recapitulate graph neural networks (GNNs) and
then describe our graph partition neural networks (GPNN).
A graph $\graph = (\nodes, \edges)$ has nodes $\nodes$ and edges $\edges
\subseteq \nodes \times \nodes$.
We focus on directed graphs, as our approach readily
applies to undirected graphs 
by splitting any undirected edge into two directed edges.
We denote the out-going neighborhood as
 $\outneighbors(v) = \{ u \in \nodes \mid (v, u) \in \edges \}$,
and similarly, the incoming neighborhood as
 $\inneighbors(v) = \{ u \in \nodes \mid (u, v) \in \edges \}$.
We associate an edge type $c_{(v,u)} \in \{1, \ldots, C\}$ with every edge $(v,
u)$, where $C$ is some pre-specified total number of edge types. Such edge
types are used to encode different relationships between nodes. Note that one
can also associate multiple edge types with the same edge which results in a
multi-graph. 
We assume one edge type per directed edge to simplify the notation here, but the
model can be easily generalized to the multi-edge case.

\subsection{Graph Neural Networks}

Graph neural networks~\citep{scarselli2009graph,li2015gated} can be
viewed as an extension of recurrent neural networks (RNNs) to arbitrary graphs.
Each node $v$ in the graph is associated with an initial state vector
$\state{0}{v}$ at time step $0$.
Initial state vectors can be observed features or annotations as in
\cite{li2015gated}.
At time step $t$, an outgoing message is computed for each edge by transforming
the source state according to the edge type, i.e., 
\begin{align}\label{model:msg_func}
\msg_{(v,u)}^{(t)} = M_{c_{(v, u)}}(\state{t}{v}),
\end{align}
where $M_{c_{(u, v)}}$ is a message function, which could be the identity or a
fully connected neural network.
Note the subscript $c_{(v, u)}$ indicating that different edges of the same type
share the same instance of the message function.
We then aggregate all messages at the receiving nodes, i.e.,
\begin{align}\label{model:msg_agg}
\bar{\msg}_{u}^{(t)} = A(\{\msg_{(v, u)}^{(t)} \mid v \in \inneighbors(u)\}),
\end{align}
where $A$ is the aggregation function, which may be a summation, average or
max-pooling function.
Finally, every node will update its state vector based on its current state
vector and the aggregated message, i.e.,
\begin{align}\label{model:update}
\state{t+1}{v} = U(\state{t}{v}, \bar{\msg}_{v}^{(t)}),
\end{align}
where $U$ is the update function, which may be a gated recurrent unit (GRU), a
long short term memory (LSTM) unit, or a fully connected network.
Note that all nodes share the same instance of update function.
The described propagation step is repeatedly applied for a fixed number of time
steps $T$, to obtain final state vectors $\{\state{T}{v} \mid v \in
\nodes \}$.
A node classification task can then be implemented by feeding these state
vectors to a fully connected neural network which is shared by all nodes.
Back-propagation through time (BPTT) is typically adopted for learning the model.

\subsection{Graph Partition Neural Networks}

The above inference process is described from the perspective of an individual
node.
If we look at the same process from the graph view, we observe a
\emph{synchronous schedule} in which all nodes receive and send messages at the
same time, cf.\ the illustration in \rF{fig:graph}(d).
A natural question is to consider different propagation schedules in which not
all nodes in the graph send messages at the same time, e.g., \emph{sequential
  schedules}, in which nodes are ordered in some linear sequence and
messages are sent only from one node at a time.
A mix of the two ideas leads to our Graph Partition Neural Networks (GPNN),
which we will discuss before elaborating on how to partition graphs
appropriately. Finally, we discuss how to handle initial node labels and node
classification tasks.


\begin{figure*}[t]
\vspace{-1.0cm}
\begin{minipage}{0.445\textwidth}
\begin{subfigure}{0.49\textwidth}
\centering
\vspace{-0.25cm}
\includegraphics[width=\linewidth]{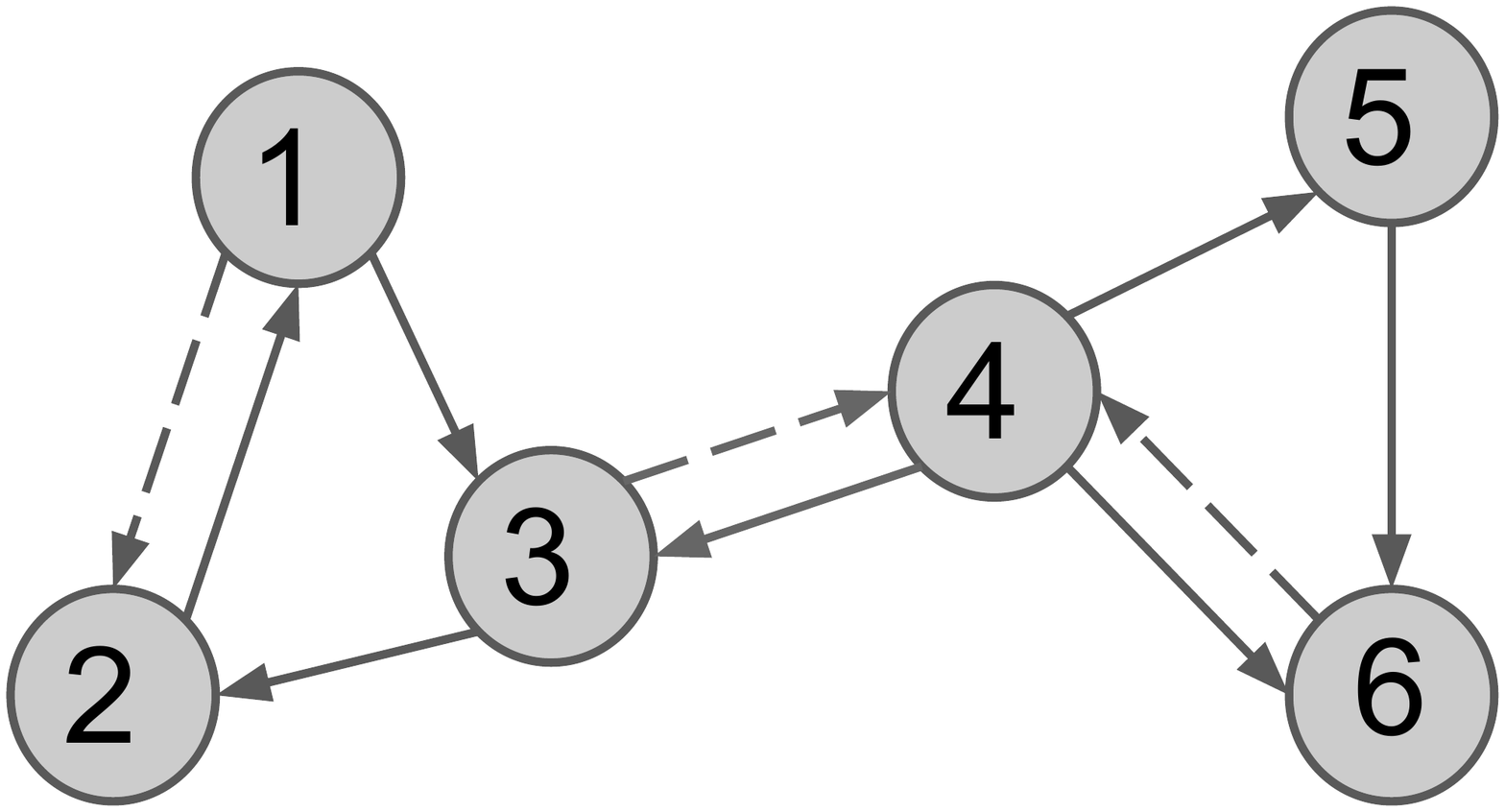} \\
\vspace{-1.0cm}
(a) 
\end{subfigure}
\hfill
\begin{subfigure}{0.49\textwidth}
\centering
\vspace{-0.25cm}
\includegraphics[width=\linewidth]{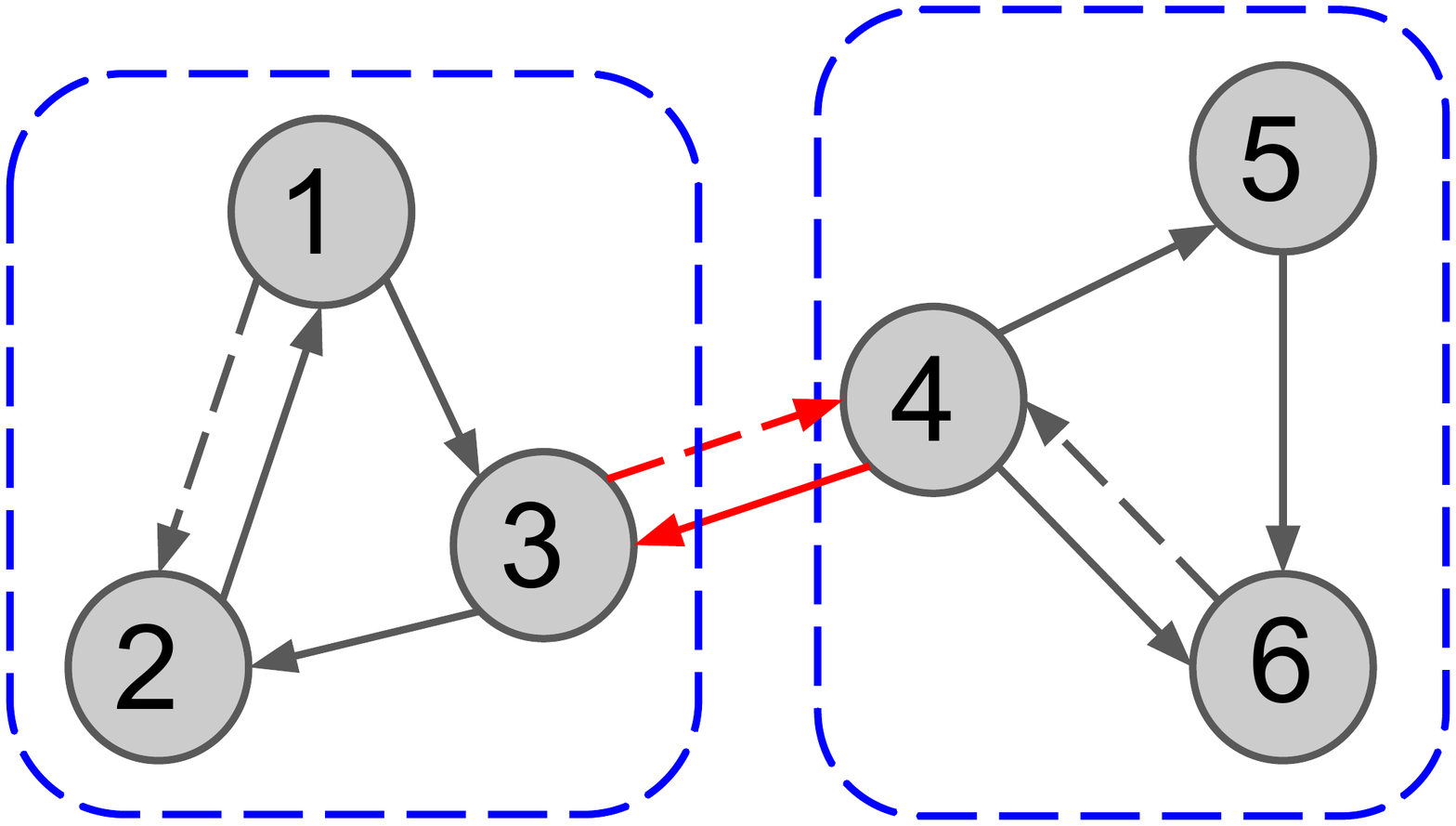} \\
\vspace{-1.0cm}
(b)
\end{subfigure}

\bigskip 
\begin{subfigure}{0.9\textwidth}
\centering
\vspace{-0.25cm}
\includegraphics[width=\linewidth]{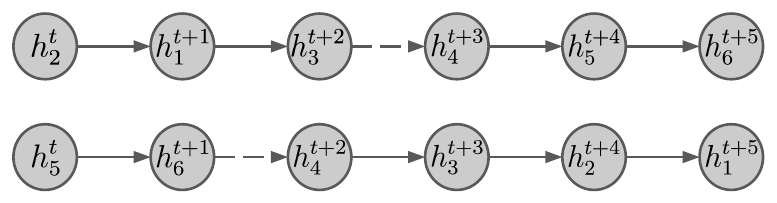} \\
(c)
\end{subfigure}
\end{minipage}
\hfill
\begin{minipage}{0.25\textwidth}
\hspace{-0.75cm}
\begin{subfigure}{\textwidth}
\centering
\includegraphics[height=1.15\linewidth]{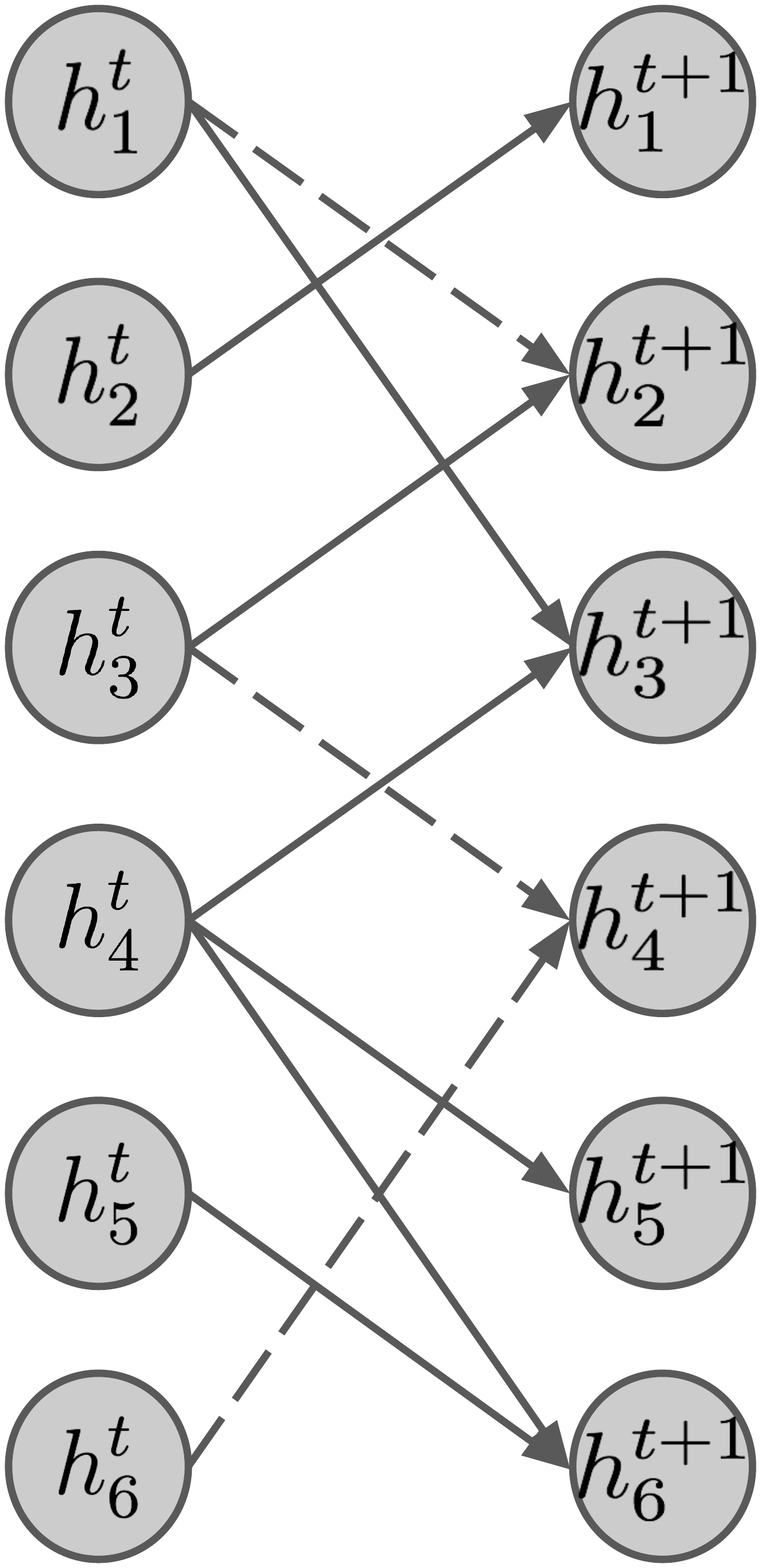} \\
(d)
\end{subfigure}
\end{minipage}
\hfill
\begin{minipage}{0.25\textwidth}
\hspace{-1.5cm}
\begin{subfigure}{\textwidth}
\centering
\includegraphics[height=1.15\linewidth]{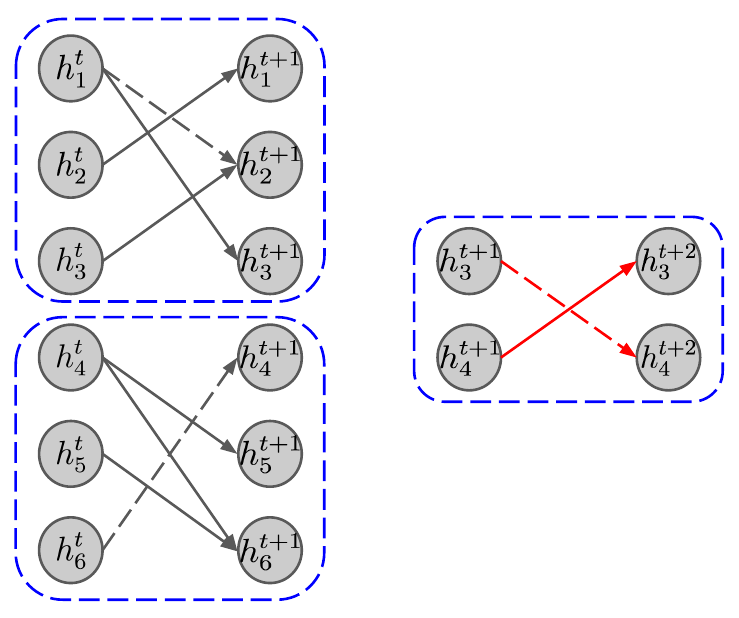} \\
(e)
\end{subfigure}
\end{minipage}  
\caption{Propagation schedules on an example graph. (a) The input graph where the line type, i.e., solid \& dash, indicates different edge types; (b) Graph partitions where blue bounding boxes indicate different subgraphs and red edges belong to the cut; (c) Computational graphs of two possible sequential propagation schedules of the input graph; (d) Computational graph for synchronous propagation schedule; (e) Computational graph for GPNNs where both inter-subgraph and intra-subgraph propagation steps are $1$.}
\label{fig:graph}
\end{figure*}

\paragraph{Propagation Model}

\begin{algorithm}[t]
\caption{Graph Partition Propagation Schedule.}
\label{alg:propagation}
\begin{algorithmic}[1]
    \State \textbf{Input}:
      $K$ subgraphs $\{\subgraph_k \vert k = 1, \dots, K\}$,
      cut $\subgraph_0$,
      outer propagation step limit $T$,
      intra-subgraph and inter-subgraph propagation step limits $T_S$ and $T_C$.
    
    \For{$t = 1, \dots, T$}
        \FORALLP{$k \in \{1, \dots, K\}$}            
            \State Execute $\textproc{SyncProp}(\subgraph_k)$ $T_S$ times.
        \ENDFAP
        \State Execute $\textproc{SyncProp}(\subgraph_0)$ $T_C$ times.
    \EndFor
    \Function{SyncProp}{Graph $\graph$}
      \State Compute \& send messages as in Eq. (\ref{model:msg_func})
      \State Aggregate messages as in Eq. (\ref{model:msg_agg})
      \State Update states as in Eq. (\ref{model:update})
    \EndFunction
\end{algorithmic}
\end{algorithm}

We first consider the example graph in \rF{fig:graph} (a).
A corresponding computational graph that shows how information is propagated
from time step $t$ to time step $t+1$ using the standard (synchronous) propagation
schedule is shown in \rF{fig:graph} (d).
The example graph's diameter is $5$, and it hence requires at least $5$ steps to
propagate information over the graph.
\rF{fig:graph}(c) instead shows two possible sequences that show how information
can be propagated between nodes $2$ to $6$ and $5$ to $1$.
These visualizations show that
 (i) a full synchronous propagation schedule requires significant computation at
     each step, and
 (ii) a sequential propagation schedule, in which we only propagate along
      sequences of nodes, results in very sparse and deep computational graphs.
Moreover, experimentally, we found sequential schedules to require multiple
propagation rounds across the whole graph, resulting in an even deeper
computational graph.

In order to achieve both efficient propagation and tractable learning, we
propose a new propagation schedule that follows a divide and conquer strategy.
In particular, we first partition the graph into disjunct subgraphs.
We will explain the details of how to compute graph partitions below.
For now, we assume that we already have $K$ subgraphs such that each
subgraph contains a subset of nodes and the edges induced by this subset.
We will also have a cut set, i.e., the set of edges that connect different
subgraphs.
One possible partition of our example is visualized in \rF{fig:graph} (b).

In GPNNs, we alternate between propagating information in parallel local to each
subgraph (making use of highly parallel computing units such as GPUs) and
propagating messages between subgraphs.
Our propagation schedule is shown in Alg. \ref{alg:propagation}.
To understand the benefit of this schedule, consider a
broadcasting problem over the example graph in Fig. \ref{fig:graph}. When information from any one node has reached all other nodes in the graph for the first time, this problem is considered as solved. We will compare the number of messages required to solve this problem for different propagation schedules.

\textit{Synchronous propagation}: Fig. \ref{fig:graph}(d) shows that a synchronous step requires 10 messages. Broadcasting requires sufficient propagation steps to cover the graph diameter (in this case, 5), giving a total of $5\times 10 = 50$ messages.

\textit{Partitioned propagation}: For simplicity, we analyze the case $T_S = D_S$, $T_C = 1$, where $D_S$ is the maximum diameter of the subgraphs. Using the partitioning in \ref{fig:graph}(e), we have $D_S=2$ and each step of intra-subgraph propagation requires 8 messages. After $T_S$ steps ($8D_S$ messages) the broadcast problem is solved within each subgraph. Inter-subgraph propagation requires 2 messages in this example, giving $8D_S+2$ messages per outer loop iteration in Alg. \ref{alg:propagation}. The example requires $2$ outer iterations to broadcast between all subgraphs, giving a total of $2(8D_S+2)=36$ messages.


In general, our propagation schedule requires no more messages than the
synchronous schedule to solve broadcast (if the number of subgraphs $K$ is 
set to $1$ or $N$ then our schedule reduces to the synchronous one).
We analyze the number of messages required to solve the broadcast problem on
chain graphs in detail in \rSC{sec:chain}.
Overall, our method avoids the large number of messages required by synchronous
schedules, while avoiding the very deep computational graphs required by
sequential schedules.
Our experiments in \rSC{sec:experiments} show that this makes learning tractable
even on extremely large graphs.

\paragraph{Graph Partition}

We now investigate how to construct graph partitions.
First, since partition problems in graph theory typically are NP-hard, we
are only looking for approximations in practice.
A simple approach is to re-use the classical spectral partition method.
Specifically, we follow the normalized cut method in \cite{shi2000normalized}
and use the random walk normalized graph Laplacian matrix $L = I - D^{-1} W$, where $I$ is
the identity matrix, $D$ is the degree matrix and $W$ is the weight matrix of
graph (i.e., the adjacency matrix if no weights are presented).

However, the spectral partition method is slow and hard to scale with large
graphs~\citep{von2007tutorial}.
For performance reasons, we developed the following heuristic method based on a
multi-seed flood fill partition algorithm as listed in \rA{alg:partition}.
We first randomly sample the initial seed nodes biased towards
nodes which are labeled and have a large out-degree.
We maintain a global dictionary assigning nodes to subgraphs, and initially
assign each selected seed node to its own subgraph.
We then grow the dictionary using flood fill, attaching unassigned nodes
that are direct neighbors of a subgraph to that graph.
To avoid bias towards the first subgraph, we randomly permute the order of
subgraphs at the beginning of each round.
This procedure is repeatedly applied until no subgraph grows anymore.
There may still be disconnected components left in the graph, which we assign to
the smallest subgraph found so far to balance subgraph sizes.

\begin{algorithm}[t]
\caption{Modified Multi-seed Flood Fill Partition Algorithm.}
\label{alg:partition}
\begin{algorithmic}[1]
    \State \textbf{Input}: Graph $G$, number of subgraphs $K$, indices $I$ of nodes which are labeled.
    \State Create two dictionaries $D$ and $L$ and $K$ FIFO queues $Q = \{Q_1, \dots, Q_K\}$. $D$ maps node index to \textproc{False} and $L$ maps node index to subgraph index $0$.
    \State $\forall u \in I$, compute the out-going degree $d_u$ of node $u$.
    \State $\forall u \in I$, compute the probability $p_u = d_u / \sum_{v \in I} d_v$.
    \State Sample $K$ nodes $S=\{s_1, \dots, s_K\}$ from $I$ based on the above probability distribution $p$.
    \State $\forall s_k \in S$, enqueue $s_k$ to $Q_k$, $D(s_k) = \textproc{True}$, $L(s_k) = k$.

    \While{Any queue in $Q$ is not empty}
        \For{$k \in \textproc{RandPerm(K)}$}
            \If{$Q_k$ is not empty}
                \State $u \leftarrow $ pop $Q_k$
                \For{$v \in \textproc{Children}(u)$}
                    \If{$D(v) == \textproc{False}$}
                        \State Enqueue $v$ to $Q_k$
                        \State $L(v) = k$
                        \State $D(v) = \textproc{True}$
                    \EndIf
                \EndFor
            \EndIf
        \EndFor
    \EndWhile

    \State Put any unvisited nodes into the smallest subgraph and set $L$ accordingly.
    \State Return $L$
\end{algorithmic}
\end{algorithm}

\paragraph{Node Features \& Classification}

In practice, problems using graph-structured data sometimes
 (1) do not have observed features associated with every
     node~\citep{grover2016node2vec};
 (2) have very high dimensional sparse features per node~\citep{bing2015improving}.
We develop two types of models for the initial node labels:
\emph{embedding-input} and \emph{feature-input}.
For \emph{embedding-input}, we introduce learnable node embeddings into the
model to solve challenge (1), inspired by other graph embedding methods. For nodes with
observed features we initialize the embeddings to these observations, and all other nodes are initialized randomly.
All embeddings are fed to the propagation model and are treated as learnable parameters.
For \emph{feature-input}, we apply a sparse fully-connected network to input
features to tackle challenge (2).
The dimension-reduced feature is then fed to the propagation model, and the
sparse network is jointly learned with the rest of model.

We also empirically found that concatenating the input features with the final
embedding produced by the propagation model is helpful in boosting the
performance.


\section{Experiments}
\label{sec:experiments}

We test our model on a variety of semi-supervised tasks
\footnote{Our code is released at \url{https://github.com/Microsoft/graph-partition-neural-network-samples}}
: document classification
on citation networks; entity classification in a bipartite graph extracted from
a knowledge graph; and distantly-supervised entity extraction.
We then compare different partition methods exploited by our model. 
We also compare the effectiveness of different propagation schedules. 
We follow the datasets and experimental setups in \cite{yang2016revisiting}. 
The statistics are summarized in \rTab{exp:datasets}, revealing that the datasets vary a lot in terms of scale, label rate and feature dimension. 
We report the details of hyper-parameters for all experiments in the appendix.


\begin{table}
\centering
\resizebox{\columnwidth}{!}{%
\begin{tabular}{@{}l|rrrrrr@{}}
\hline
\toprule
Dataset & \#Nodes & \#Edges & \#Classes & \#Features & Label Rate  \\
\midrule
Citeseer & 3,327 & 4,732 & 6 & 3,703 & 0.036\phantom{$^\ast$} \\
Cora & 2,708 & 5,429 & 7 & 1,433 & 0.052\phantom{$^\ast$} \\
Pubmed & 19,717 & 44,338 & 3 & 500 & 0.003\phantom{$^\ast$} \\
NELL & 65,755 & 266,144 & 210 & 5,414 & 0.1, 0.01, 0.001\phantom{$^\ast$} \\
DIEL & 4,373,008 & 4,464,261 & 4 & 1,233,598 & 0.0095$^\ast$ \\
\bottomrule
\end{tabular}
}
\caption{Dataset statistics. $^\ast$ indicates the average label rate over $10$ fixed splits.}
\label{exp:datasets}
\end{table}

\subsection{Citation Networks}

We first discuss experimental results on three citation networks: Citeseer, Cora and Pubmed \citep{sen2008collective}. The datasets contain sparse bag-of-words feature vectors for each document and a list of citation links between documents. Documents and citation links are regarded as nodes and edges while constructing the graph. $20$ instances are sampled for each class as labeled data, 1000 instances as test data, and the rest are used as unlabeled data. The goal is to classify each document into one of the predefined classes. We use the same data split as in \cite{yang2016revisiting} and \cite{kipf2016semi}. We use an additional validation set of 500 labeled nodes for tuning hyperparameters as in \cite{kipf2016semi}.

The experimental results are shown in \rTab{exp:citation}.
We report the results of baselines directly from \cite{yang2016revisiting} and
\cite{kipf2016semi}.
We see that GPNN is on par with other state-of-the-art methods on these small graphs.
We also conducted experiments with $10$ random splits and results are reported in the appendix.
We found these datasets easy to overfit due to their small size, and use
\emph{feature-input} rather than \emph{embedding-input}, as the latter case
increases the model capacity as well as the risk of overfitting.
We also show a t-SNE \citep{maaten2008visualizing} visualization of node
representations produced by the propagation model of GGNN and \ourmodelshort~on
the Cora dataset in Fig. \ref{fig:visualization} (a) and (b) respectively.
The visualizations show that the node representations of \ourmodelshort~are better separated.

\begin{table}
\centering
\resizebox{\columnwidth}{!}{%
\begin{tabular}{@{}l@{}c@{}c@{}c@{}c@{}c@{}c@{}c@{}lll@{}}
\hline
\toprule
Method &~~~~& Citeseer &~~~~& Cora &~~~~& Pubmed &~~~~~~~& \multicolumn{3}{c}{NELL} \\
      &&          &&      &&        &&  $10\%$ & $1\%$ & $0.1\%$\\
\cmidrule{1-1} \cmidrule{3-7} \cmidrule{9-11}
Feat\citep{yang2016revisiting}                     && 57.2 && 57.4 && 69.8 && 62.1 & 40.4 & 21.7 \\
ManiReg\citep{belkin2006manifold}                  && 60.1 && 59.5 && 70.7 && 63.4 & 41.3 & 21.8\\
SemiEmb\citep{weston2012deep}                      && 59.6 && 59.0 && 71.1 && 65.4 & 43.8 & 26.7\\
LP\citep{zhu2003semi}                              && 45.3 && 68.0 && 63.0 && 71.4 & 44.8 & 26.5\\
DeepWalk\citep{perozzi2014deepwalk}                && 43.2 && 67.2 && 65.3 && 79.5 & 72.5 & 58.1\\
ICA\citep{lu2003link}                              && 69.1 && 75.1 && 73.9 && \ \ \  -- & \ \ \  -- & \ \ \  --\\
Planetoid (Transductive)\citep{yang2016revisiting} && 64.9 && 75.7 && 75.7 && 84.5 & \textbf{75.7} & 61.9\\
Planetoid (Inductive)\citep{yang2016revisiting}    && 64.7 && 61.2 && 77.2 && 70.2 & 59.8 & 45.4\\
GCN\citep{kipf2016semi}                            && \textbf{70.3} && 81.5 && 79.0 && 83.0$^{\dagger}$ & 67.0$^{\dagger}$ & 54.2$^{\dagger}$\\ 
GGNN$^{\ast}$\citep{li2015gated}                   && 68.1 && 77.9 && 77.2 && \textbf{84.6} & 66.2 & 59.1\\
\textbf{\ourmodelshort} (Ours)                     && 69.7 && \textbf{81.8} && \textbf{79.3} && 84.4 & 74.7 & \textbf{63.9}\\
\bottomrule
\end{tabular}
}
\caption{Classification accuracies on citation networks and knowledge graphs.
  $^\ast$ (resp. $^{\dagger}$) indicates we ran our own (resp. the released)
  implementation.
\label{exp:citation}\label{exp:nell}
}
\end{table}

\subsection{Entity Classification}

Next, we consider experimental results of entity classification task on the NELL
dataset extracted from the knowledge graph first presented in
\cite{carlson2010toward}. 
A knowledge graph consists of a set of entities and a set of directed edges
which have labels (i.e., different types of relation).
Following \cite{yang2016revisiting}, each triplet $(e_1, r, e_2)$ of entities $e_1, e_2$ and relation $r$ in the knowledge graph is split into two tuples.
Specifically, we assign separate relation nodes $r_1$ and $r_2$ to each entity
and thus obtain $(e_1, r_1)$ and $(e_2, r_2)$.
Entity nodes are associated with sparse feature vectors.
We follow \cite{kipf2016semi} to extend the number of features by assigning a
unique one-hot representation for every relation node.
This results in a $61278$-dim sparse feature vector per node.
An additional validation set of $500$ labeled nodes under the label rate $0.1\%$
as in \cite{kipf2016semi} is used for tuning hyperparameters.
The chosen hyperparameters are then used for other label rates.
The semi-supervised task here considers three different label rates $10\%$,
$1\%$, $0.1\%$ per class in the training set.
We run the released code of GCN with the reported hyperparameters in
\cite{kipf2016semi}.
Since we did not observe overfitting on this dataset, we choose the
\emph{embedding-input} variant as the input model.
The results are shown in \rTab{exp:nell}, where we see that our model outperforms competitors under the most
challenging label rate $0.001$ and obtain comparable results with
the state of the art on other label rates.

\subsection{Distantly-Supervised Entity Extraction}

Finally, we consider the DIEL (Distant Information Extraction using
coordinate-term Lists) dataset~\citep{bing2015improving}.
This dataset constructs a bipartite graph where nodes are medical entities and
texts (referred as mentions and coordinate lists in the original paper).
Texts contain some facts about the medical entities. Edges of the graph are links between entities and texts.
Each entity is associated with a pre-extracted sparse feature vector.
The goal is to extract medical entities from text given sparse feature vectors
and the graph.
As shown in \rTab{exp:datasets}, this dataset is very challenging due to its extremely large scale and very high-dimensional sparse features.
Note that we attempted to run the released code GCN model on this dataset, but
ran out of memory.
Thus, we adapted the public implementation of GCN to make it successfully run on
this dataset, and also implemented GCN with our partition-based schedule.

We follow the exact experimental setup as in
\cite{bing2015improving,yang2016revisiting}, including $10$ different data
splits, preprocessing of entity mentions and coordinate lists, and evaluation.
We randomly sample $1/5$ of the training nodes as the validation set.
We regard the top-$k$ entities returned by a model as positive instances and
compute recall$@k$ as the evaluation metric where $k=240000$ as in
\cite{bing2015improving, yang2016revisiting}.
Average recall over $10$ runs is reported in \rTab{exp:diel}, and we see that
{\ourmodelshort} outperforms all other models.
Note that since Freebase is used as ground truth and some entities are not
present in texts, the upper bound of recall given by \cite{bing2015improving} is
$61.7\%$.

\begin{table}
\centering
\begin{tabular}{@{}l|c@{}}
\hline
\toprule
Method & Recall$@$k  \\
\midrule
LP~~~\citep{zhu2003semi} &  16.20 \\
DeepWalk~~~\citep{perozzi2014deepwalk} &  25.80 \\
Feat~~~\citep{yang2016revisiting} & 	34.90 \\
DIEL~~~\citep{bing2015improving} &  40.50 \\
ManiReg~~~\citep{belkin2006manifold} &	47.70 \\
SemiEmb~~~\citep{weston2012deep} &	48.60 \\
Planetoid (Transductive)~~~\citep{yang2016revisiting} & 50.00 \\
Planetoid (Inductive)~~~\citep{yang2016revisiting} & 50.10 \\
GCN$^{\ast}$~~~\citep{kipf2016semi} & 48.14 \\ 
GCN + Partition$^{\ast}$ & 48.47 \\
GGNN$^{\ast}$~~~\citep{li2015gated} &	51.15 \\ 
\ourmodelshort & \textbf{52.11} \\
\bottomrule
\end{tabular}
\caption{Average recall on the DIEL dataset.
  $^\ast$ indicates that we ran our own implementation.}
\label{exp:diel}
\end{table}

\subsection{Comparison of Different Partition Methods}

We now compare the two partition methods we considered for our model: spectral
partition and our modified multi-seed flood fill. We use the NELL data set to
benchmark and report the average validation accuracy over $10$ runs in
\rTab{exp:partition}, in which we also report the average runtime of the
partition process.
The accuracies of the trained models do not allow for a clear conclusion as to
which method to use, and in our further experiments they seem to highly depend on
the number of subgraphs, the connectivity of input graphs, optimization and
other factors.
However, our multi-seed flood fill partition method is substantially faster and
is efficiently applicable to very large graphs.

\begin{table}
\centering
\resizebox{\columnwidth}{!}{%
\begin{tabular}{@{}lrr@{\qquad}r@{}}
\hline
\toprule
Number of subgraphs & \multicolumn{2}{c}{Spectral Partition} & Modified Multi-seed Flood Fill\\
\midrule
\phantom{0}5  & 54.8\% &    (2.5s) & 62.0\%\quad (0.36s) \\
10            & 55.6\% &    (4.2s) & 63.1\%\quad (0.36s) \\
20            & 58.0\% &   (12.2s) & 57.5\%\quad (0.43s) \\
30            & 60.1\% & (3115.0s) & 59.9\%\quad (0.23s) \\
\bottomrule
\end{tabular}
}
\caption{Accuracy and run time of different partition methods with different numbers of subgraphs.}
\label{exp:partition}
\end{table}

\subsection{Comparison of Different Propagation Schedules}\label{exp:prop_schedule}

Besides the synchronous and our partition based propagation schedules, we also
investigated two further schedules based on a sequential order and a series of
minimum spanning trees (MST).

To generate a sequential schedule, we first perform graph traversal via breadth
first search (BFS) which gives us a visiting order.
We then split the edges into those that follow the visiting order and those that
violate it.
The edges in each class construct a directed acyclic graph (DAG), and we
construct a propagation schedule from each DAG following the principle that
every node will send messages once it receives all messages from its parents and
updates its own state.
An example of the schedule is given in the appendix.
Note that this sequential schedule reduces to a standard bidirectional recurrent
neural network on a chain graph.

For the MST schedule, we find a sequence of minimum spanning trees as follows. We first assign random positive weights between $0$ and $1$ to every edge and then apply Kruskal's algorithm to find an MST. Next we increase the weights by $1$ for edges which are present in the MST we found so far. This process is iterated until we find $K$ MSTs where $K$ is the total number of propagation steps.

We compare all four schedules by varying the number of propagation steps on the
Cora dataset. The validation accuracies are shown in Fig. \ref{fig:visualization} (c).
To clarify, assuming graph is singly connected, then the number of edges per propagation step of MST, Sequential, Synchronous and Partition in Fig. \ref{fig:visualization} (c) are $|V|-1$, $|E|$, $|E|$ and $|E|$ respectively. 
Here, $V$ and $E$ are the set of nodes and edges.
We also show the average results of $10$ runs with different random seeds on Cora in \rTab{exp:schedule_cora}. 

\begin{table}
\centering
\resizebox{\columnwidth}{!}{%
\begin{tabular}{@{}l|ccc@{}}
\hline
\toprule
Prop Step & 1 & 3 & 5 \\  
\midrule
MST & 59.94\% $\pm$ 0.89  & 71.83\% $\pm$ 0.96 & 77.1\% $\pm$ 0.72   \\
Sequential & 73.04\% $\pm$ 1.93  & 77.55\% $\pm$ 0.65 & 74.89\% $\pm$ 1.26 \\
Synchronous & 67.36\% $\pm$ 1.44 & 80.15\% $\pm$ 0.80 & 80.06\% $\pm$ 0.98  \\
Partition & 68.1\% $\pm$ 1.98 & 80.27\% $\pm$ 0.78 & 80.12\% $\pm$ 0.93 \\
\bottomrule
\end{tabular}
}
\caption{Accuracy of different partition methods with different propagation
  steps on the Cora dataset.}
\label{exp:schedule_cora}
\end{table}

In these results, the meaning of one propagation step varies.
For the synchronous schedule, a propagation step means that every node sent and
received messages once and updated its state.
For the sequential schedule, it means that messages from all roots of the two
DAGs were sent to all the leaves.
For the MST-based schedule, it means sending messages from the root to all
leaves on one minimum spanning tree.
For our partition schedules, it means one outer loop of the algorithm.
In this sense, messages are propagated furthest through the graph for the
sequential schedule within one propagate step.
This becomes visible in the results on a single propagation step, in which the
sequential schedule yields the highest accuracy.
However, when increasing the number of propagation steps, the computation graph
associated with the sequential schedule becomes extremely deep, making the
learning problem very hard.
Our proposed partition schedule performs similarly to the synchronous schedule
(while requiring less computation), and better than other asynchronous schedules
when using more than a single propagation step.

\begin{figure*}[t]
  \centering
  \begin{minipage}[b]{0.32\textwidth}
    \centering
    \includegraphics[width=\textwidth]{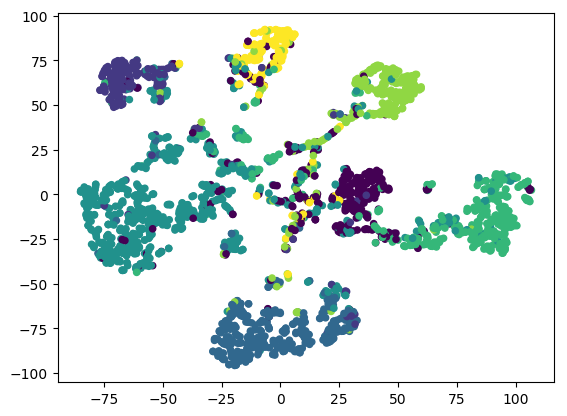} \\
    (a)
  \end{minipage}  
  \begin{minipage}[b]{0.32\textwidth}
    \centering
    \includegraphics[width=\textwidth]{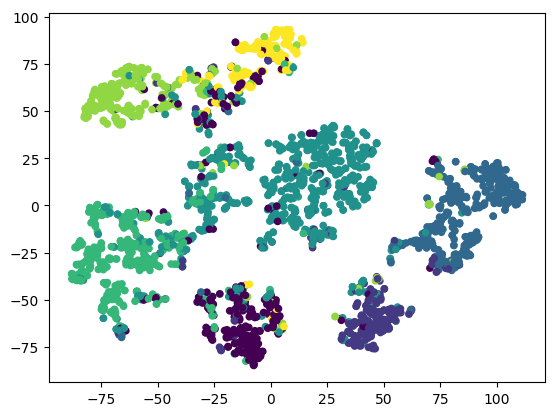} \\
    (b)
  \end{minipage}
  \begin{minipage}[b]{0.32\textwidth}
    \centering
    \includegraphics[width=\textwidth]{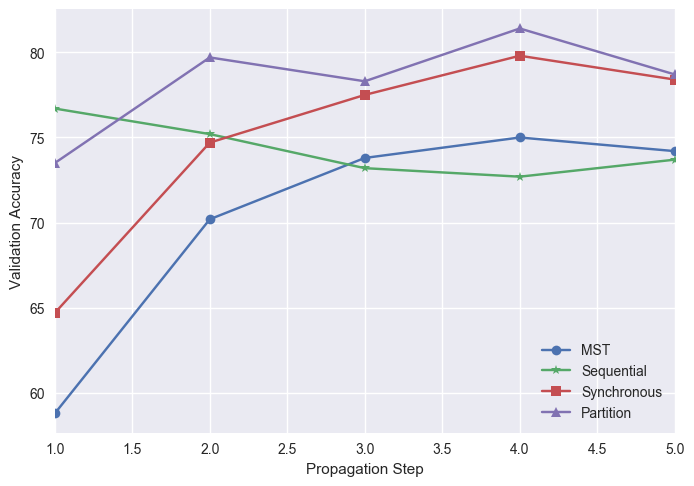} \\
    (c)
  \end{minipage}  
  \caption{(a), (b) The t-SNE visualization of node representations produced by propagation model of GGNN and {\ourmodelshort} on Cora dataset in which nodes actually belong to $7$ classes. (c) Comparison of different propagation schedules with varying propagation steps.} 
  \label{fig:visualization}
\end{figure*}


\section{Conclusion}

We presented graph partition neural networks, which extend graph neural
networks.
Relying on graph partitions, our model alternates between locally propagating
information between nodes in small subgraphs and globally propagating
information between the subgraphs.
Moreover, we propose a modified multi-seed flood fill for fast partitioning of
large scale graphs.
Empirical results show that our model performs better or is comparable to
state-of-the-art methods on a wide variety of semi-supervised node
classification tasks.
However, in contrast to existing models, our GPNNs are able to handle extremely
large graphs well.

There are quite a few exciting directions to explore in the future. One is to
learn the graph partitioning as well as the GNN weights, using a soft
partition assignment. Other types of propagation schedules which have proven useful in probabilistic graphical models are also worthwhile to explore in the context of GNNs. To further improve the efficiency of propagating information, different nodes within the graph could share some memory, which mimics the shared memory model in the theory of distributed computing. Perhaps most importantly, this work makes it possible to run GNN models on very large graphs, which potentially opens the door to many new applications.




\bibliography{gpnn}
\bibliographystyle{ieee}

\clearpage
\appendix
\section{Appendix}

\subsection{Bi-directional Chain}
\label{sec:chain}

In this section, we revisit the broadcast problem on bi-direction chain graphs. We show that our propagation schedule has advantages over the synchronous one via the following proposition.

\begin{prop}
Let $\graph$ be a bi-direction chain of size $N$. We have:
(1) Synchronous propagation schedule requires $2(N-1)^2$ messages to solve the problem;
(2) If we partition the chain evenly into $K$ sub-chains for $1 \le K \le N$, GPNN propagation schedule can solve the problem with $2((N-K)^2+(K-1)^2)$ messages.
\end{prop}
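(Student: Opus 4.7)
The plan is to handle parts (1) and (2) separately, with the bulk of the work in part (2).

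For part (1), the bi-directional chain on $N$ nodes has $2(N-1)$ directed edges and diameter $N-1$. Under the synchronous schedule, information advances by exactly one edge per round, so $N-1$ rounds are required for the information at one endpoint to reach the other, which is precisely when broadcast completes. Each round fires one message on every directed edge, giving a total of $(N-1)\cdot 2(N-1) = 2(N-1)^2$ messages.

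For part (2), I would partition the chain into $K$ contiguous sub-chains of size $N/K$, with the cut set consisting of the $K-1$ bidirectional edges between adjacent sub-chains. I then invoke \rA{alg:propagation} with $T_S = N/K - 1 = D_S$, $T_C = 1$, and a total of $K$ intra-subgraph phases interleaved with $K-1$ inter-subgraph phases (equivalently, $K-1$ full outer iterations followed by one final intra phase). The central claim, to be proved by induction on $m$, is that after the $m$-th intra phase (with $m-1$ preceding inter phases interleaved), every node of sub-chain $k$ knows the initial information of every node whose sub-chain index lies in $[k-(m-1),\,k+(m-1)]\cap[1,K]$. The base case $m=1$ holds because one intra phase of $T_S=D_S$ steps lets each sub-chain solve its own internal broadcast. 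For the inductive step, the $m$-th inter phase fires one message across every cut edge: the right-boundary node of sub-chain $k$ absorbs what the left-boundary node of sub-chain $k+1$ knows---by hypothesis, information from sub-chains $[k-m+2,\,k+m]$---and symmetrically on the left. A subsequent intra phase of $D_S$ steps then disseminates this to every node of sub-chain $k$, extending the known range to $[k-m,\,k+m]$.

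The worst case for broadcast is the pair of endpoint sub-chains $1$ and $K$, which requires $K-1 \le m-1$, hence $m = K$ intra phases together with $K-1$ inter phases are sufficient. Counting messages is then direct: the sub-chains jointly contain $2(N-K)$ intra directed edges, so each intra phase uses $2(N-K)(N/K-1)$ messages, and the $K$ such phases total $K \cdot 2(N-K)(N/K-1) = 2(N-K)^2$. The cut contains $2(K-1)$ directed edges, so the $K-1$ inter phases contribute $2(K-1)^2$. Summing yields $2\bigl((N-K)^2 + (K-1)^2\bigr)$, and it is worth sanity-checking the two degenerate cases $K = 1$ and $K = N$, both of which collapse to the synchronous bound $2(N-1)^2$.

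The main obstacle I anticipate is formalizing the induction cleanly: one must track separately what the two boundary nodes versus interior nodes of a sub-chain know between the inter and intra subphases, be careful about the union of ranges contributed by $l_k$, the interior, and $r_k$ before each intra phase absorbs them, and correctly clip the index ranges at the two endpoint sub-chains. Once this bookkeeping is fixed, the message accounting itself is routine arithmetic.
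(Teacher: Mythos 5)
Your proposal is correct and follows essentially the same route as the paper's proof: the same $T_S = N/K-1$, $T_C = 1$ schedule with $K$ intra phases and $K-1$ inter phases, and the same message accounting yielding $2\bigl((N-K)^2+(K-1)^2\bigr)$. The only difference is that your induction on the range $[k-(m-1),\,k+(m-1)]$ rigorously justifies the claim that $K$ outer iterations suffice, which the paper simply asserts.
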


\begin{proof}
We first analyze the case for synchronous propagation schedule. At each round, it needs $2(N-1)$ messages to propagate messages one step away. Since it requires at least $(N-1)$ steps for message from one endpoint of the chain to reach the other, the number of messages to solve broadcast is thus $2(N-1)^2$.

We now turn to our schedule. Since the chain is evenly partitioned, each sub-chain is of $n=N/K$ nodes. We need to perform $(n-1)$ propagation steps to traverse a sub-chain, so we set $T_S=n-1$. The number of messages required by a single sub-chain during the intra-subgraph propagation phase is $2(n-1)^2$, and so all $K$ sub-chains collectively require $2K(n-1)^2$ messages. Between intra-subgraph propagation, we perform $T_C=1$ step of inter-subgraph propagation to transfer messages over the cut edges between sub-chains. Each inter-subgraph step requires $2$ messages per cut edge - i.e. 2(K-1) messages in total. We need $K$ outer loops to ensure that message from any node can reach any other nodes, and strictly speaking, the the last inter-subgraph propagation step is unnecessary. So in total, we require $K\times 2K(n-1)+(K-1)\times 2(K-1) = 2((N-K)^2+(K-1)^2)$ messages, which proves the proposition.
\end{proof}

One can see from the above proposition that if we take $K = 1$ and $K = N$, the number of messages of our schedule matches the synchronous one. We can also derive the optimal value of $K$ as $(N+1)/2$ resulting in a factor of $2$ reduction in the total messages sent compared to the synchronous schedule.

\subsection{Hyperparameters}

We train all models using Adam \cite{kingma2014adam} with a learning rate of $0.01$. 
We also use early stopping with a window size of $10$. We clip the norm gradient to ensure that it is no larger than $5.0$. 
The maximum epoch of all experiments except NELL is set to $100$. The one of NELL is $300$. 
The weight decays for Cora, Citeseer, Pubmed, NELL and DIEL are set to $7.0e^{-4}$, $5.0e^{-4}$, $9.0e^{-4}$, $7.0e^{-4}$ and $1.0e^{-5}$ respectively. 
The dimensions of state vectors of \ourmodelshort for Cora, Citeseer, Pubmed, NELL and DIEL are set to $128$, $128$, $128$, $512$ and $64$. The output model for Cora, Citeseer, NELL is just softmax layer. For Pubmed and DIEL, we add one hidden layer with $tanh$ activation function before the softmax which have dimension $512$ and $2048$ respectively.

\subsection{Random Splits of Citation Networks}

We include the results on citation networks with $10$ random splits in Table \ref{exp:citation_rand}. 
From the table, we can see that our results are comparable with the state-of-the-art on these small scale datasets.

\begin{table}
\centering
\begin{tabular}{@{}l|ccc@{}}
\hline
\toprule
Method & Citeseer & Cora & Pubmed  \\
\midrule
GCN$^{\dagger}$~~~\citep{kipf2016semi} & \textbf{68.7} $\pm$ 2.0 & \textbf{80.4} $\pm$ 2.8 & \textbf{77.5} $\pm$ 2.1 \\
GGNN$^{\ast}$~~~\citep{li2015gated} & 66.3 $\pm$ 2.0 & 78.9 $\pm$ 2.6 & 74.7 $\pm$ 2.8 \\
\ourmodelshort & 68.6 $\pm$ 1.7 & 79.9 $\pm$ 2.4 & 76.1 $\pm$ 2.0 \\
\bottomrule
\end{tabular}
\caption{Classification accuracies on citation networks with $10$ random splits. $^\ast$ and $\dagger$ indicates we run our own implementation and the released code respectively.}
\label{exp:citation_rand}
\end{table}

\subsection{Sequential Propagation Schedule}

In Fig. \ref{fig:vis_seq} we show an example visualization of the DAGs decomposition of the sequential propagation schedule we implemented in the section \ref{exp:prop_schedule}. 

\begin{figure*}[t]
  \centering
  \begin{minipage}[b]{0.32\textwidth}
    \centering
    \includegraphics[width=\textwidth]{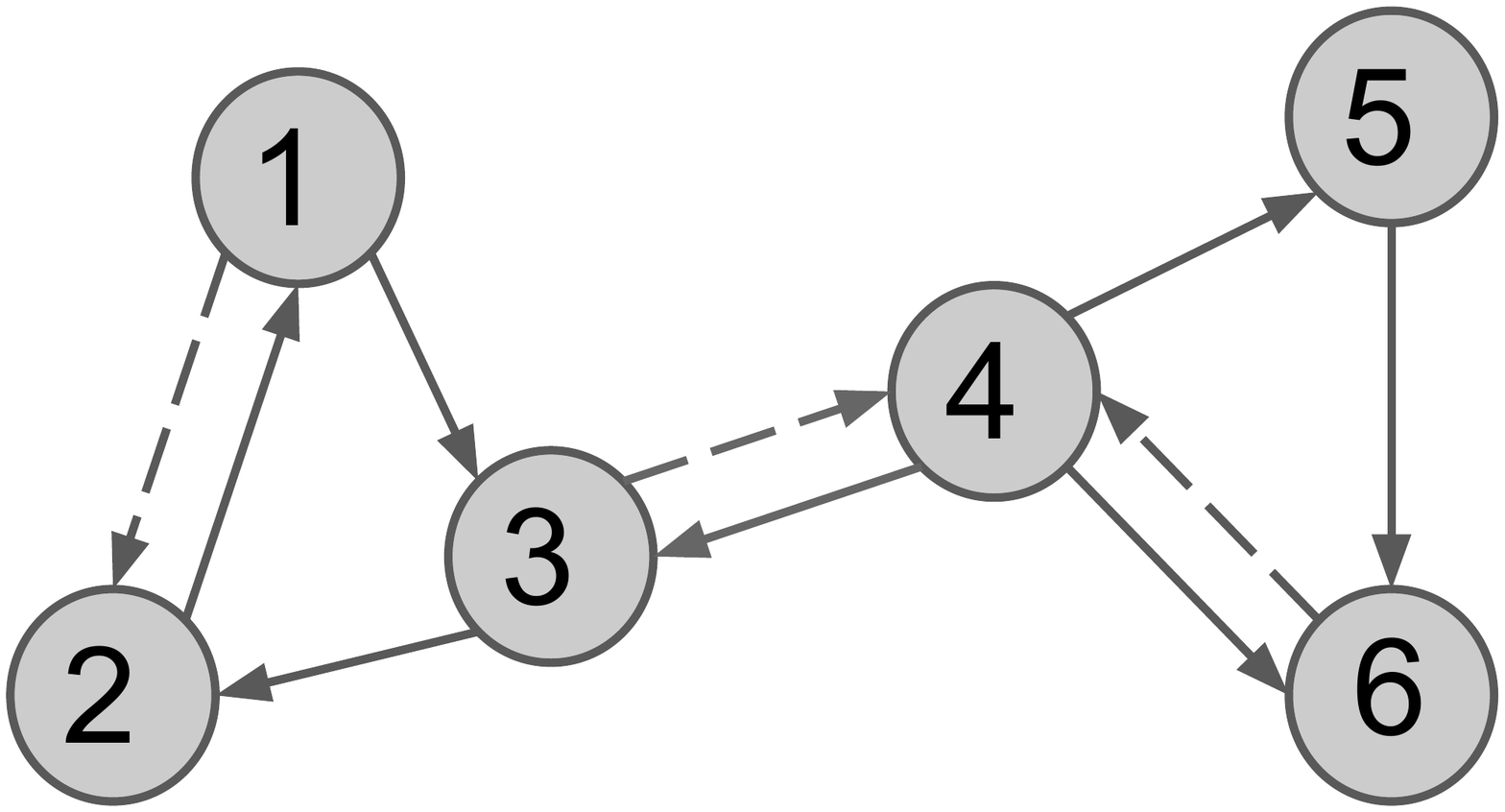} \\
    (a)
  \end{minipage}  
  \begin{minipage}[b]{0.32\textwidth}
    \centering
    \includegraphics[width=\textwidth]{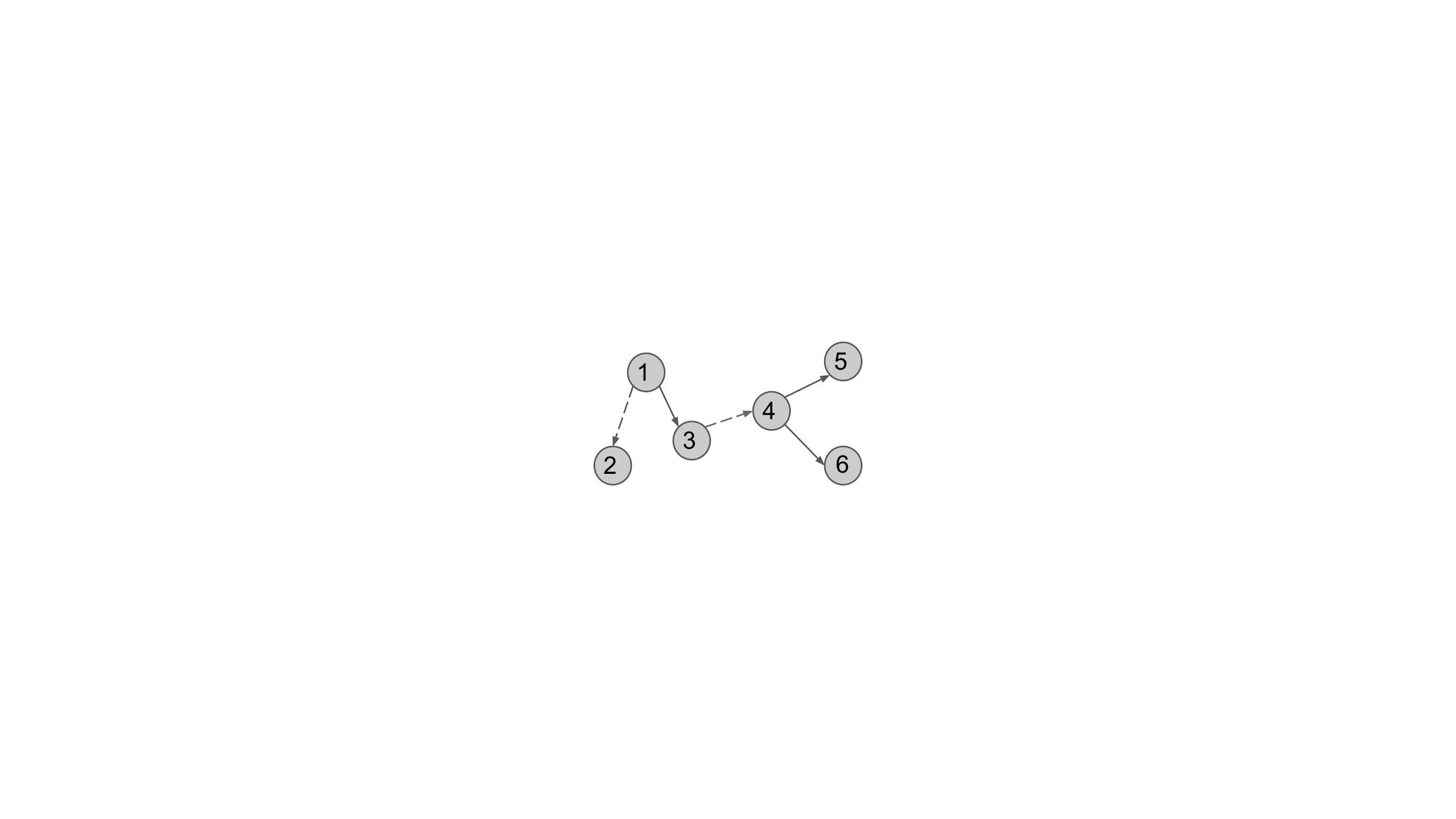} \\
    (b)
  \end{minipage}
  \begin{minipage}[b]{0.32\textwidth}
    \centering
    \includegraphics[width=\textwidth]{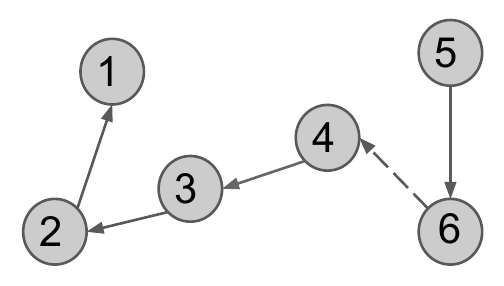} \\
    (c)
  \end{minipage}  
  \caption{Sequential scheduling. (a) The original graph. (b) and (c) are the two DAGs obtained by the sequential schedule we described in section \ref{exp:prop_schedule} where BFS traversal is started from node $1$.} 
  \label{fig:vis_seq}
\end{figure*}

\subsection{Random Partition Schedule}

We did an experiment on schedules which are determined by random partitions of the graph. 
In particular, for $k$-step propagation, we randomly sample $1/k$ proportion of edges from the whole edge set without replacement and use them for update. 
We summarize the results ($10$ runs) on the Cora dataset in Table \ref{exp:rand_schedule}.
\begin{table}
\centering
\begin{tabular}{@{}l|cccc@{}}
\hline
\toprule
Propagation Step & 2 & 3 & 5 & 10 \\
\midrule
Avg Acc & 76.03 & 74.71 & 72.09 & 69.99 \\
Std Acc & 1.55 & 1.31 & 1.81 & 2.26 \\ 
\bottomrule
\end{tabular}
\caption{Classification accuracies on Cora network with random partition based schedule.}
\label{exp:rand_schedule}
\end{table}

From the results, we can see that the best average accuracy $(K = 2)$ is $76.03$ which is still lower than both synchronous and our partition based schedule. 
Note that this result roughly matches the one with spanning trees. 
The reason might be that random schedules typically need more propagation steps to spread information throughout the graph. 
However, more propagation steps of GNNs may lead to issues in learning with BPTT.

\subsection{Implementation}

The released code of GGNN~\citep{li2015gated} is implemented in Torch. We implement both our own version of GGNN and our model in Tensorflow~\citep{tensorflow2015-whitepaper}. To ensure correctness, we first reproduced the experimental results of the paper on bAbI artificial intelligence (AI) tasks with our implementations of GGNN. Our code will be released soon.
One challenging part is the implementation of synchronous propagation within
subgraphs.
We implicitly implement the parallel part by building one separate branch of the
computational graph for each subgraphs (i.e., use a Python \texttt{for} loop
rather than \texttt{tf.while\_loop}).
This relies on the claim that tensorflow optimizes the execution of
the computational graph in a way that independent branches of the graph will be
executed in parallel as decribed in~\cite{tensorflow2015-whitepaper}.
However, since we have no control of the optimization of the computational graph,
this part could be improved by explicitly putting each branch on one separate
computation device, just like the multi-tower solution for training convolutional
neural networks (CNNs) on multiple GPUs.

\end{document}